\documentclass{article}
\usepackage{ijcai26}

\usepackage{times}
\usepackage{soul}
\usepackage{url}
\usepackage[hidelinks]{hyperref}
\usepackage[utf8]{inputenc}
\usepackage[small]{caption}
\usepackage{graphicx}
\usepackage{amsmath}
\usepackage{amssymb}
\usepackage{amsthm}
\usepackage{booktabs}
\usepackage{algorithm}
\usepackage{algorithmic}
\usepackage{tikz}
\usetikzlibrary{arrows.meta,decorations.pathreplacing,positioning,calc}
\usepackage{pgfplots}
\pgfplotsset{compat=1.16}
\usepackage{xcolor}

\theoremstyle{plain}
\newtheorem{theorem}{Theorem}
\newtheorem{proposition}{Proposition}

\theoremstyle{definition}
\newtheorem{definition}{Definition}
\newtheorem{assumption}{Assumption}

\theoremstyle{remark}
\newtheorem{remark}{Remark}

\newcommand{\E}{\mathbb{E}}
\newcommand{\Wass}{W_1}
\newcommand{\Hent}{\mathcal{H}}
\newcommand{\nominal}{\bar{P}_b}
\newcommand{\amb}{\mathcal{U}}
\newcommand{\Real}{\mathbb{R}}
\newcommand{\Sbr}{s_{\mathrm{br}}}
\newcommand{\SG}{s_{G}}
\newcommand{\SF}{s_{F}}
\newcommand{\Vmm}{V_{\mathrm{maximin}}}
\newcommand{\Vbr}{V_{\mathrm{BR}}}
\newcommand{\CVaR}{\mathrm{CVaR}}
\newcommand{\EVaR}{\mathrm{EVaR}}

\title{Quantifying Risk Under Evolving Uncertainty:\\ Belief-Dependent Robustness for Safe Sequential Decision Making%
\thanks{Deep Kumar Ganguly is supported by the German Research Foundation (DFG) through Research Training Group GRK 2428 ConVeY. Jan K\v{r}et\'{\i}nsk\'y is one of the PI's of the RobustiAI project.}}

\author{
Deep Kumar Ganguly$^1$\and
Jan K\v{r}et\'{\i}nsk\'y$^{1,2}$\\
\affiliations
$^1$Technical University of Munich, Germany\\
$^2$Masaryk University, Brno, Czech Republic\\
\emails
deep.ganguly@tum.de,
jan.kretinsky@tum.de
}

\begin{document}

\maketitle

\begin{abstract}
Many agents must act while still learning what environment they are in, which raises a basic question: how cautious should they be? A fixed answer is rarely right---too much caution wastes opportunity once the environment is understood, while planning for the average case can be unsafe early on. We propose RATTL (Risk-Adversarial Total-Reward Learning), a framework that ties an agent's caution to how much it still does not know. The agent keeps a Bayesian belief about the unknown parts of its environment and hedges against a range of plausible dynamics whose size grows with the uncertainty of that belief, measured through a Wasserstein (optimal-transport) distance; as it learns, this range shrinks and behaviour shifts smoothly from worst-case caution toward ordinary reward maximization. The idea follows the Entropic Value-at-Risk, which recasts ``how cautious should I be?'' as ``how large should my set of plausible models be?''. We put the framework on a formal footing: the planning problem is well posed, and its value always sits between a fully cautious baseline and the best one could do with full knowledge---a \emph{Safety Sandwich}---with the gap closing as the environment is identified. We also begin to pin down what kind of risk this caution encodes, showing that in a canonical safety setting it matches a familiar tail-risk measure (Conditional Value-at-Risk) whose severity is set by the belief entropy. A simple worked example shows the agent holding back until it is confident, then switching to the efficient action at a clear threshold. RATTL targets runtime safety for agents---including LLM-based systems---that must act under uncertainty.
\end{abstract}

\section{The Problem: Risk That Changes as You Learn}

Consider an autonomous system that must act in real time while gradually identifying its environment: a self-driving car facing a driver of unclear intent, a medical AI choosing a treatment before a diagnostic returns, or an LLM-based agent invoking tools against an adversary of unknown sophistication. The agent holds a \emph{belief} over hidden environmental parameters and updates it through observation, yet must commit to actions \emph{now}. How much risk should it tolerate at each moment? This depends on how much uncertainty remains: early on, even a small probability of catastrophe warrants caution; once the environment is largely identified, risk-neutral optimization is appropriate. \textbf{The agent's risk attitude should be a function of its epistemic state}, decreasing in conservatism as information accumulates. Neither classical robust control (uniform worst-case reasoning) nor Bayesian RL (expected value under a diffuse belief) does this. A principled mechanism is needed that continuously adjusts the agent's position on the \emph{risk spectrum} (Figure~\ref{fig:spectrum}).

\begin{figure}[t]
\centering
\resizebox{\columnwidth}{!}{%
\begin{tikzpicture}[x=1cm,y=1cm]
  \draw[-{Latex[length=2.2mm]}, thick] (-0.1,0) -- (7.1,0);
  \foreach \x/\lbl/\sub in {%
      0/{$\mathbb{E}[X]$}/{risk-neutral},
      2.2/{$\mathrm{CVaR}_\alpha$}/{tail avg.},
      4.4/{$\mathrm{EVaR}_\alpha$}/{entropic},
      6.6/{$\mathrm{ess\,sup}(X)$}/{worst case}}{
    \fill[blue!60!black] (\x,0) circle (1.3pt);
    \node[above=1pt, font=\scriptsize] at (\x,0) {\lbl};
    \node[below=2pt, font=\scriptsize, gray] at (\x,0) {\sub};
  }
  \foreach \x in {1.1,3.3,5.5}{%
    \node[font=\tiny, above=8pt] at (\x,0) {$\leq$};
  }
  \node[right, font=\scriptsize, gray] at (7.1,0) {\emph{conservatism}};
  \draw[decorate, decoration={brace, mirror, amplitude=4pt},
        red!70!black, thick] (0,-0.7) -- (6.6,-0.7);
  \node[below, font=\scriptsize, red!70!black] at (3.3,-0.85)
    {RATTL: belief entropy selects position on this spectrum};
\end{tikzpicture}}
\caption{The coherent-risk hierarchy: EVaR is the single-parameter coherent family sweeping $\mathbb{E}$ to $\mathrm{ess\,sup}$. RATTL uses belief entropy $\Hent(b)$ to slide along it.}
\label{fig:spectrum}
\end{figure}

\paragraph{Contributions.} (i) We formalize RATTL: a reduction of a partially-observed turn-based stochastic game to a \emph{subjective robust MDP} whose Wasserstein ambiguity radius equals the Shannon entropy of the Bayesian belief (\S\ref{sec:rattl}). (ii) We prove three guarantees---contractivity, a Safety Sandwich, and convergence to best response---each with the precise, adversarially-checked conditions under which it holds (\S\ref{sec:theory}). (iii) We make first progress on the risk-measure identity of Wasserstein ambiguity: the inner robust value is coherent and equals a Lipschitz-regularized expectation, and on the two-point catastrophe it is \emph{exactly} a CVaR with an entropy-controlled tail level (\S\ref{sec:wevar}). (iv) We give a complete worked example with a sharp safety switch (\S\ref{sec:bridge}).

\section{The Risk Spectrum and Why EVaR}
\label{sec:evar}

For a random cost $X$, the standard coherent risk measures form a hierarchy of increasing conservatism~\cite{ahmadi2012evar}: $\mathbb{E}[X] \leq \mathrm{CVaR}_\alpha(X) \leq \mathrm{EVaR}_\alpha(X) \leq \mathrm{ess\,sup}(X)$. The Entropic Value-at-Risk is the tightest upper bound on CVaR obtainable from exponential moments:
\begin{equation}
\mathrm{EVaR}_\alpha(X) = \inf_{t > 0}\!\left\{ \tfrac{1}{t}\ln\tfrac{\mathbb{E}[e^{tX}]}{\alpha} \right\}.
\end{equation}
Two precise properties make EVaR the natural \emph{motivating} dial. First, it is a single-parameter \emph{coherent} family with \textbf{full spectrum coverage}: $\mathrm{EVaR}_1 = \mathbb{E}$ and $\mathrm{EVaR}_\alpha \to \mathrm{ess\,sup}$ as $\alpha\to 0^+$ (we use it for this coverage and its KL-DRO dual, not as the only such family). Second, it has an exact \textbf{DRO dual},
\begin{equation}
\mathrm{EVaR}_\alpha(X) = \sup_{\,D_{\mathrm{KL}}(Q\|P) \,\leq\, -\ln\alpha} \mathbb{E}_Q[X],
\label{eq:evardual}
\end{equation}
so choosing a confidence level $\alpha$ is \emph{equivalent} to choosing a KL-ball radius $-\ln\alpha$ around the nominal $P$. This is the equivalence we exploit: it converts ``how conservative should the agent be?'' into ``how large should the ambiguity set be?''---and the latter has a natural answer in the agent's epistemic uncertainty. We are careful to claim only this: EVaR supplies the conceptual bridge. RATTL's operational ambiguity sets are Wasserstein, for a safety reason developed in \S\ref{sec:wevar}, and the precise risk measure they induce is the subject of that section.

\section{RATTL: Belief Entropy as a Risk Dial}
\label{sec:rattl}

\paragraph{Setting.} A partially-observed turn-based stochastic game (PO-TBSG) has finite states $\mathcal{S}$, finite actions $\mathcal{A}$, a finite set of opponent types $\mathcal{Z}$, a terminal set $\mathcal{T}\subset\mathcal{S}$, and for each type $z$ a transition kernel $P(\cdot\mid s,a,z)\in\Delta(\mathcal{S})$. Rewards $r(s,a)$ are bounded and terminal rewards $r(s)$ are defined for $s\in\mathcal{T}$. We use the undiscounted total-reward criterion (a proper / stochastic-shortest-path MDP). The agent maintains a Bayesian belief $b\in\Delta(\mathcal{Z})$ over the type, updated by
\begin{equation}
\psi(b,s,a,s')(z) = \frac{P(s'\mid s,a,z)\,b(z)}{\sum_{z'}P(s'\mid s,a,z')\,b(z')},
\end{equation}
and the belief-averaged (nominal) kernel $\nominal(\cdot\mid s,a) = \sum_z b(z) P(\cdot\mid s,a,z)$.

\begin{definition}[Entropy-modulated ambiguity set]
For a ground metric $d$ on $\mathcal{S}$ and risk-sensitivity $\beta>0$,
\begin{equation}
\amb(b\mid s,a) = \big\{\, Q \in \Delta(\mathcal{S}) : \Wass\!\big(Q,\nominal(\cdot\mid s,a)\big) \leq \beta\,\Hent(b) \,\big\},
\end{equation}
where $\Wass$ is the $1$-Wasserstein distance and $\Hent(b)=-\sum_z b(z)\ln b(z)$ is the Shannon entropy.
\end{definition}

The radius $\varepsilon(b)=\beta\Hent(b)$ \emph{implicitly selects} the agent's conservatism: a diffuse belief gives wide ambiguity (effectively worst-case), a sharp belief gives tight ambiguity (effectively risk-neutral), with graduated conservatism in between (Figure~\ref{fig:balls}). The induced \emph{Nash-Robust Bellman operator} on bounded $V:\mathcal{S}\times\Delta(\mathcal{Z})\to\Real$ with $V(s,\cdot)=r(s)$ for $s\in\mathcal{T}$ is
\begin{equation}
\resizebox{0.92\linewidth}{!}{$\displaystyle
(\mathbb{T} V)(s,b) = \max_a\!\Big[ r(s,a) + \!\!\inf_{Q \in \amb(b\mid s,a)}\!\! \sum_{s'} Q(s')\,V\!\big(s',\psi(b,s,a,s')\big) \Big]$}.
\label{eq:bellman}
\end{equation}
The belief update $\psi$ is applied \emph{inside} the expectation, coupling the adversary's kernel choice with the agent's future information state. The operator has a game reading: the agent (Max) picks an action; a fictitious adversary (Nature/Min) picks the worst kernel within the current ambiguity budget. RATTL goes in the \emph{opposite direction} to the known RMDP\,$\to$\,stochastic-game reduction~\cite{chatterjee2024robust}: rather than expanding an RMDP into a game, we compress a partially-observed game into a subjective robust MDP whose uncertainty set is \emph{non-stationary}, evolving with belief.

\begin{figure}[t]
\centering
\begin{tikzpicture}[x=1cm,y=1cm]
  \foreach \xs/\rad/\bvec/\Hval/\col in {%
      0/1.0/{$b{=}(.5,.5)$}/{$\Hent{=}0.69$}/red!70!black,
      3.0/0.55/{$b{=}(.85,.15)$}/{$\Hent{=}0.42$}/orange!80!black,
      6.0/0.18/{$b{=}(.99,.01)$}/{$\Hent{=}0.06$}/green!50!black}{
    \draw[thick] (\xs-1.1,-1.0) -- (\xs+1.1,-1.0) -- (\xs,0.9) -- cycle;
    \node[font=\tiny,below=2pt] at (\xs-1.1,-1.0) {$\SF$};
    \node[font=\tiny,below=2pt] at (\xs+1.1,-1.0) {$\Sbr$};
    \node[font=\tiny,above=2pt] at (\xs,0.9) {$\SG$};
    \fill[\col, fill opacity=0.18] (\xs+0.15,0.0) circle (\rad);
    \draw[\col, thick] (\xs+0.15,0.0) circle (\rad);
    \fill[\col] (\xs+0.15,0.0) circle (1.2pt);
    \node[font=\scriptsize,\col,above=18pt] at (\xs,0.9) {\bvec};
    \node[font=\scriptsize,\col,above=8pt]  at (\xs,0.9) {\Hval};
  }
\end{tikzpicture}
\caption{Wasserstein ambiguity balls $\amb(b)$ on the next-state simplex for three beliefs of decreasing entropy. The ball shrinks (and its center $\nominal$ moves) as the belief sharpens: high entropy lets the adversary push mass toward catastrophic states; low entropy renders it nearly powerless.}
\label{fig:balls}
\end{figure}

\paragraph{Assumptions.} We isolate the conditions our theorems need.
\begin{assumption}[Properness]\label{ass:proper}
Under every policy and every kernel selection in every $\amb(b)$, the terminal set $\mathcal{T}$ is reached with probability $1$.
\end{assumption}
\begin{assumption}[Uniform reachability]\label{ass:unif}
There exist $m\ge1$ and $\eta\in(0,1]$ such that from every non-terminal $(s,b)$, under every admissible control and adversary kernel, $\mathcal{T}$ is reached within $m$ steps with probability $\ge\eta$.
\end{assumption}
\begin{assumption}[Identifiability]\label{ass:ident}
For all $z\neq z'$ there is an $(s,a)$ with $P(\cdot\mid s,a,z)\neq P(\cdot\mid s,a,z')$.
\end{assumption}
\noindent Assumption~\ref{ass:unif} is the quantitative strengthening that the belief \emph{continuum} forces: a.s.\ reachability alone (Ass.~\ref{ass:proper}) need not give a uniformly bounded expected hitting time over $\Delta(\mathcal{Z})$. With finite $\mathcal{S},\mathcal{A},\mathcal{Z}$ it holds whenever the one-step absorption probability is bounded below over the (compact) action$\times$adversary sets. Rewards are assumed bounded throughout.

\section{Theoretical Guarantees}
\label{sec:theory}

\begin{theorem}[Contractivity]\label{thm:contraction}
Under Assumptions~\ref{ass:proper}--\ref{ass:unif}, let $w(s)=\sup_{b}\sup_{\mathrm{adm.}}\E^{(s,b)}[\tau_{\mathcal{T}}]$ be the worst-case expected hitting time. Then $1\le w(s)\le W:=m/\eta<\infty$, and $\mathbb{T}$ is a contraction of modulus $\rho=1-1/W<1$ in the weighted sup-norm $\|V\|_w=\max_{s\notin\mathcal{T},b}|V(s,b)|/w(s)$ on the affine space $\{V:V(s,\cdot)=r(s)\ \forall s\in\mathcal{T}\}$. Hence $\mathbb{T}$ has a unique fixed point $V^*$ and $\mathbb{T}^k V\to V^*$ geometrically.
\end{theorem}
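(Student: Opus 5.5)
The plan follows the classical stochastic-shortest-path argument (Bertsekas--Tsitsiklis) for weighted sup-norm contraction, lifted to the belief-augmented state $(s,b)$ and to the inner infimum over the Wasserstein ball.

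\textbf{Step 1: bound $w$.} First I would show $1\le w(s)\le W$. The lower bound is immediate, since $\tau_{\mathcal{T}}\ge1$ from any non-terminal state. For the upper bound, split time into blocks of $m$ steps. By Assumption~\ref{ass:unif}, applied at the (random) state--belief pair at the start of each block via the strong Markov property, each block fails to hit $\mathcal{T}$ with probability at most $1-\eta$, uniformly over $b$, the control and the adversary. Hence $\Pr(\tau_{\mathcal{T}}>km)\le(1-\eta)^k$, and
\[
\E[\tau_{\mathcal{T}}]\le \sum_{k\ge0} m(1-\eta)^k=m/\eta ,
\]
uniformly in $(s,b)$ and in the admissible selections.

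\textbf{Step 2: one-step inequality for $w$.} Next I need $w$ to satisfy a one-step Bellman inequality under every admissible kernel: for all $s\notin\mathcal{T}$, $b$, $a$, and $Q\in\amb(b\mid s,a)$,
\[
1+\sum_{s'\notin\mathcal{T}}Q(s')\,w(s')\le w(s).
\]
This should follow from the definition of $w$ as a supremum. Playing $(a,Q)$ first and then an $\epsilon$-optimal continuation from each $(s',\psi(b,s,a,s'))$ gives an admissible strategy from $(s,b)$ whose expected hitting time is $1+\sum_{s'}Q(s')\,\E^{(s',\psi)}[\tau]$, and this is at most $w(s)$. Letting $\epsilon\to0$ gives the inequality. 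It uses that the supremum in $w(s')$ also ranges over all beliefs, so the belief update $\psi$ is harmless. Rewriting with $w(s)\ge1$ and $w(s)\le W$:
\[
\sum_{s'\notin\mathcal{T}}Q(s')\,w(s')\le w(s)-1\le\Big(1-\tfrac1W\Big)w(s)=\rho\, w(s).
\]

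\textbf{Step 3: contraction.} Take $V,U$ in the affine space. Their difference vanishes on $\mathcal{T}$, because $V(s,\cdot)=U(s,\cdot)=r(s)$ there. Using that $|\max f-\max g|\le\max|f-g|$ and $|\inf f-\inf g|\le\sup|f-g|$ over the same index sets:
\[
|(\mathbb{T}V-\mathbb{T}U)(s,b)|\le\max_a\sup_{Q\in\amb(b\mid s,a)}\sum_{s'\notin\mathcal{T}}Q(s')\,|V-U|(s',\psi)\le\|V-U\|_w\,\rho\, w(s).
\]
Dividing by $w(s)$ and taking the maximum over $s\notin\mathcal{T}$ and $b$ gives modulus $\rho$. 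Note that the inner set depends only on $(b,s,a)$ and is the same for $V$ and $U$, which is what makes the inf comparison legitimate.

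\textbf{Step 4: fixed point.} Completeness of the space of bounded functions with $\|\cdot\|_w$ holds, since $\|\cdot\|_w$ is equivalent to the sup-norm because $1\le w\le W$. The affine subspace is closed, and $\mathbb{T}$ maps it into itself: $\mathbb{T}V$ is bounded since rewards are bounded and $Q$ is a probability vector. Banach's theorem then yields the unique fixed point $V^*$ and geometric convergence of $\mathbb{T}^kV$.

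\textbf{Main obstacle.} I expect Step 2 to be the delicate part. It needs care with measurability and $\epsilon$-optimal selection over the belief continuum, and it needs the class of admissible (control, adversary) strategies to be closed under this one-step concatenation. I would first try to handle it by defining $w$ through the value iteration of the maximal hitting-time operator $(\mathbb{H}h)(s,b)=1+\sup_{a,Q}\sum_{s'\notin\mathcal{T}}Q(s')\,h(s',\psi)$ starting from $0$. Its monotone limit satisfies the inequality with equality and is bounded by $W$ by Step 1, so it can replace $w$ if the supremum formulation proves awkward.
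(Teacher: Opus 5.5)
\textbf{There is a genuine gap in Step 2, and it is the same one in the paper.} Your outline matches the paper's proof step for step: block bound, drift inequality, non-expansiveness of $\max$ and $\inf$, then Banach. But Step 2 does not go through. The paper's appendix asserts the same drift inequality, justified only by citing the unit-cost SSP case.

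Write $h(s,b):=\sup_{\mathrm{adm}}\E^{(s,b)}[\tau_{\mathcal{T}}]$, so $w(s)=\sup_b h(s,b)$. Your concatenation (play $(a,Q)$, then $\epsilon$-optimal continuations from $(s',\psi(b,s,a,s'))$) only proves
\[
1+\sum_{s'\notin\mathcal{T}}Q(s')\,h\bigl(s',\psi(b,s,a,s')\bigr)\le h(s,b)\le w(s).
\]
An $\epsilon$-optimal continuation from $(s',\psi(b,s,a,s'))$ attains $h(s',\psi(b,s,a,s'))$, not $w(s')$. Replacing $h(s',\psi)$ by the larger $w(s')$ increases the left-hand side. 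So the supremum over all beliefs in $w(s')$ works against you rather than making $\psi$ harmless: you would need the worst belief at every $s'$ to be exactly the one $\psi$ produces. The Bertsekas--Tsitsiklis inequality holds for the max-hitting-time function of the full Markov state, which here is $(s,b)$. It does not survive taking $\sup_b$.

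\textbf{The inequality, and the stated modulus, genuinely fail.} Take types $z_1,z_2$, non-terminal states $s_1,s_2$, and one action. Under $z_2$, $s_1\to s_2$ w.p.\ $0.9$ (else terminate), and $s_2$ terminates. Under $z_1$, $s_1$ terminates and $s_2$ self-loops w.p.\ $0.9$.
\begin{itemize}
\item Only $z_2$ produces $s_1\to s_2$, so $\psi(b,s_1,a,s_2)=\delta_{z_2}$, and the radius vanishes at point masses.
\item Hence $w(s_2)\ge h(s_2,\delta_{z_1})=10$, while $w(s_1)\to1.9$ as $\beta\to0$, since the $O(\beta)$ adversarial mass is negligible.
\item Assumptions~\ref{ass:proper}--\ref{ass:unif} hold with $m=1$ and $\eta\approx0.1$, so the claimed modulus is about $0.9$.
\item At $(s_1,\delta_{z_2})$ the only admissible kernel gives $\sum_{s'\notin\mathcal{T}}Q(s')w(s')\ge9$, against $w(s_1)-1\approx0.9$.
\item Setting $V-V'=w$ on non-terminal states yields $\|\mathbb{T}V-\mathbb{T}V'\|_w\ge9/w(s_1)\approx4.7$. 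So $\mathbb{T}$ is not even non-expansive in $\|\cdot\|_w$.
\end{itemize}

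\textbf{Your fallback is the correct repair, provided the weight stays a function of $(s,b)$.} The limit $h^*=\lim_k\mathbb{H}^k0$ satisfies $\sum_{s'\notin\mathcal{T}}Q(s')h^*(s',\psi)\le h^*(s,b)-1\le(1-1/W)h^*(s,b)$, with $1\le h^*\le W$. So your Step 3 runs verbatim in the norm $\|V\|_{h^*}=\max_{s\notin\mathcal{T},b}|V(s,b)|/h^*(s,b)$. This norm is equivalent to the sup-norm, so you still get the unique fixed point and geometric convergence. Alternatively, $\|\mathbb{T}^mV-\mathbb{T}^mV'\|_\infty\le(1-\eta)\|V-V'\|_\infty$ follows directly from Assumption~\ref{ass:unif}.

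What you cannot do is collapse $h^*$ to $\sup_b h^*(s,b)$, because that is exactly the statement's $w$. Contraction in $\|\cdot\|_w$ as stated is false, so no proof can establish it.
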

\begin{proof}[Proof sketch]
Iterating Assumption~\ref{ass:unif} over blocks of $m$ steps gives $\Pr(\tau_{\mathcal{T}}>km)\le(1-\eta)^k$, so $w(s)\le m/\eta$ uniformly in $b$ and strategy; thus $\|\cdot\|_w$ is a genuine norm on the affine space (differences vanish on $\mathcal{T}$) equivalent to $\|\cdot\|_\infty$, which is complete. The inner $\inf$ over a fixed set is non-expansive, $|\inf_Q f-\inf_Q g|\le\sup_Q|f-g|$, and since $r(s,a)$ cancels and $\max_a$ is non-expansive, a one-step drift inequality $(\mathcal{L}w)(s)\le w(s)-1$ (worst-case unit-cost SSP, \cite{bertsekas1996neuro}) yields $|(\mathbb{T}V-\mathbb{T}V')(s,b)|\le\|V-V'\|_w\,(w(s)-1)$; dividing by $w(s)$ and using $w\le W$ gives modulus $1-1/W$. Belief augmentation is harmless: $\psi$ is a deterministic function of the history, rewards depend only on $(s,a)$, and $\mathcal{T}\subset\mathcal{S}$ so reachability is independent of $b$. Full proof in Appendix~\ref{app:contraction}.
\end{proof}

The next result is the one reviewers asked us to make concrete: what does robustness-with-learning \emph{do}, behaviorally? It brackets the value between two reference policies.

\begin{theorem}[Safety Sandwich]\label{thm:sandwich}
Let $\Vmm$ be the fixed point of the operator identical to \eqref{eq:bellman} but with the radius \emph{frozen} at $\varepsilon_{\max}=\beta\ln|\mathcal{Z}|$ (same center $\nominal$), and let $\Vbr(\cdot,z)$ be the optimal value of the non-robust MDP with known type $z$. Under Assumptions~\ref{ass:proper}--\ref{ass:ident},
\begin{equation}
\Vmm(s,b)\ \le\ V^*(s,b)\ \le\ \sum_{z\in\mathcal{Z}} b(z)\,\Vbr(s,z)\quad\forall (s,b).
\end{equation}
\end{theorem}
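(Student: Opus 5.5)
The plan is to prove both inequalities by the same mechanism: compare Bellman operators pointwise, then use monotonicity together with the contraction of Theorem~\ref{thm:contraction} to pass the comparison to fixed points. Write $\mathbb{T}_{\max}$ for the frozen-radius operator defining $\Vmm$, $\mathbb{T}$ for \eqref{eq:bellman}, and $\mathbb{T}_0$ for the radius-zero (Bayes-adaptive, nominal-kernel) operator. All three are monotone: $V\le V'$ pointwise implies $\mathbb{T}_\bullet V\le \mathbb{T}_\bullet V'$, because both $\max_a$ and $\inf_Q$ of a nonnegative-weighted sum preserve order. The basic lemma is the following. If $F$ is a monotone operator whose iterates converge to its fixed point $V_F$ from any bounded start, and $U$ satisfies $F U\le U$, then $V_F\le U$. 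Symmetrically, $FU\ge U$ gives $V_F\ge U$. For $\mathbb{T}$ this convergence is exactly Theorem~\ref{thm:contraction}, since weighted-norm convergence implies pointwise convergence.

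\emph{Lower bound.} Since $\Hent(b)\le\ln|\mathcal{Z}|$, we have $\amb(b\mid s,a)\subseteq\{Q:\Wass(Q,\nominal)\le\varepsilon_{\max}\}$. The two sets have the same center, so the infimum over the larger set is smaller, and $\mathbb{T}_{\max}V\le\mathbb{T}V$ for every admissible $V$. Apply this at $V=\Vmm$: $\Vmm=\mathbb{T}_{\max}\Vmm\le\mathbb{T}\Vmm$. By induction and monotonicity, $\Vmm\le\mathbb{T}^k\Vmm$ for all $k$, and letting $k\to\infty$ gives $\Vmm\le V^*$ by Theorem~\ref{thm:contraction}. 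One point must be checked: that $\Vmm$ exists and is bounded. This requires the proof of Theorem~\ref{thm:contraction} to go through for the frozen ball. I will interpret the properness and uniform-reachability assumptions as covering the adversary's admissible kernels in the maximal ball, which is how the statement defines $\Vmm$ as ``the fixed point''. Verifying this carefully is the step I expect to be most delicate, since the enlarged ball could a priori let Nature avoid $\mathcal{T}$.

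\emph{Upper bound.} Set $U(s,b):=\sum_z b(z)\Vbr(s,z)$. On $\mathcal{T}$ it equals $r(s)$, so it lies in the affine space of Theorem~\ref{thm:contraction}. Each $\Vbr(\cdot,z)$ is finite by properness, and hence $U$ is bounded. First, $\nominal\in\amb(b\mid s,a)$ because the radius is nonnegative, so $\mathbb{T}U\le\mathbb{T}_0U$. Next, the key computation is the martingale identity of the Bayes update:
\begin{equation}
\sum_{s'}\nominal(s'\mid s,a)\sum_z\psi(b,s,a,s')(z)\,\Vbr(s',z)=\sum_z b(z)\sum_{s'}P(s'\mid s,a,z)\,\Vbr(s',z).
\end{equation}
It follows because the normalizer of $\psi$ cancels the factor $\nominal(s'\mid s,a)$. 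Hence $(\mathbb{T}_0U)(s,b)=\max_a\sum_z b(z)\big[r(s,a)+P_z\Vbr(\cdot,z)\big]$. Exchanging $\max_a$ with the convex combination and applying each type's Bellman equation gives
\begin{equation}
(\mathbb{T}_0U)(s,b)\le\sum_z b(z)\max_a\big[r(s,a)+P_z\Vbr(\cdot,z)\big]=U(s,b).
\end{equation}
Thus $\mathbb{T}U\le U$, and the basic lemma yields $V^*\le U$.

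Some remarks on what is and is not used. Identifiability (Assumption~\ref{ass:ident}) is not needed for the inequalities themselves; it matters only for the later gap-closing statement. The upper-bound argument never requires the Bayes-adaptive value $\mathbb{T}_0$'s fixed point to exist, because the iteration is run with $\mathbb{T}$. The main obstacle is therefore only the well-posedness of $\Vmm$ under the enlarged ambiguity set, discussed above.
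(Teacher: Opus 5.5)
Your proof is correct, and it takes a somewhat different route from the paper's.

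\emph{Lower bound.} The paper uses the same inclusion $\amb(b)\subseteq\amb_{\max}(b)$, but it runs value iteration for $\mathbb{T}_{\max}$ starting from $V^*$, showing $\mathbb{T}_{\max}^kV^*\le V^*$. That requires the $\mathbb{T}_{\max}$ iterates to converge, i.e.\ Theorem~\ref{thm:contraction} for the frozen ball. This is exactly the point you flag, and the paper simply asserts it (``$\varepsilon_{\max}$ a valid radius''). Your mirror-image version iterates $\mathbb{T}$ from $\Vmm$ instead. It therefore needs only that $\Vmm$ is some bounded fixed point in the affine space, which the statement already presupposes. Well-posedness then concerns the definition of $\Vmm$, not the inequality itself.

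\emph{Upper bound.} The paper argues policy by policy:
\begin{enumerate}
\item Since $\nominal\in\amb(b)$, each policy's robust value is at most its nominal value.
\item The nominal process with $\psi$-updates has the same trajectory law as ``draw $z\sim b$ once, then follow $P(\cdot\mid\cdot,\cdot,z)$''. Hence the nominal value equals $\sum_z b(z)V^\pi_z(s)\le\sum_z b(z)\Vbr(s,z)$.
\item It takes $\sup_\pi$, using $V^*=\sup_\pi V^\pi_{\mathrm{rob}}$, which it imports from minimax SSP theory.
\end{enumerate}
You instead show that $U=\sum_z b(z)\Vbr(\cdot,z)$ is a supersolution of $\mathbb{T}$. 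The one-step Bayes identity (the normalizer of $\psi$ cancels $\nominal$) and the linearity of $U$ in $b$ replace the paper's multi-step trajectory-law induction.

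\emph{Trade-off.} Your route stays inside the operator framework of Theorem~\ref{thm:contraction}. It avoids the policy representation of $V^*$, which the paper only cites and which is not automatic for a belief-augmented game over a continuum. The paper's route yields a stronger, more interpretable per-policy statement: every policy's robust value is bounded by its Bayes-mixture value, so the gap to the ceiling is a value-of-information gap.

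One detail to make explicit in your write-up: the per-type Bellman equation you invoke follows from Assumption~\ref{ass:proper} at $b=\delta_z$. There $\amb(\delta_z\mid s,a)=\{P(\cdot\mid s,a,z)\}$, and $\psi$ leaves $\delta_z$ fixed, so each type-$z$ MDP is proper.
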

\begin{proof}[Proof]
\emph{Lower bound.} Since $\Hent(b)\le\ln|\mathcal{Z}|$ and both balls share the center $\nominal$, $\amb(b)\subseteq\amb_{\max}(b)$; the $\inf$ over the larger set is no larger, so $\mathbb{T}_{\max}V\le\mathbb{T}V$ pointwise for every $V$. Starting value iteration from $V^*$ and using monotonicity, $\mathbb{T}_{\max}^k V^*\le\mathbb{T}V^*=V^*$ for all $k$; letting $k\to\infty$ gives $\Vmm\le V^*$.
\emph{Upper bound.} For any policy $\pi$, the nominal kernel is feasible ($\nominal\in\amb(b)$), so the robust value $V^\pi_{\mathrm{rob}}\le V^\pi_{\mathrm{nom}}$. Running $\nominal$ with $\psi$-updates is exactly the marginal-and-posterior factorization of the mixture process ``draw $z\sim b$ once, then follow $P(\cdot\mid\cdot,\cdot,z)$''; hence the trajectory laws coincide and $V^\pi_{\mathrm{nom}}(s,b)=\sum_z b(z) V^\pi_z(s)$. As $V^\pi_z(s)\le\Vbr(s,z)$ for every type, $V^\pi_{\mathrm{rob}}(s,b)\le\sum_z b(z)\Vbr(s,z)$; taking $\sup_\pi$ gives the claim. Full proof in Appendix~\ref{app:sandwich}.
\end{proof}

\begin{remark}[Why the belief-averaged ceiling]
The stronger ceiling $V^*\le\Vbr(s,z^*)$ at the realized true type $z^*$ is \emph{false} in general: $V^*$ is a deterministic function of $(s,b)$ while $z^*$ is random, and the true kernel $P(\cdot\mid s,a,z^*)$ typically lies \emph{outside} $\amb(b)$ when $b$ is not a point mass (Identifiability makes the types $\Wass$-separated). As $b\to\delta_{z^*}$ both bounds coincide: the radius vanishes and $\sum_z b(z)\Vbr(s,z)\to\Vbr(s,z^*)$. This is exactly the asymptotic role of Theorem~\ref{thm:convergence}.
\end{remark}

\begin{theorem}[Convergence to best response]\label{thm:convergence}
Assume \ref{ass:proper}, \ref{ass:ident}, and \emph{persistent identification} (PI): along the realized trajectory, for each pair of types an identifying $(s,a)$ is visited infinitely often a.s.\ (automatic under any proper, fully-exploring behavior policy). Then $b_t\to\delta_{z^*}$ a.s.~\cite{schwartz1965bayes}, so $\Hent(b_t)\to0$ and $\amb(b_t)$ collapses to $\{P(\cdot\mid\cdot,\cdot,z^*)\}$ in Hausdorff distance; consequently $V^*(s,b_t)\to\Vbr(s,z^*)$ a.s. If in addition the per-step log-likelihood separation is bounded below (persistent excitation; e.g.\ all positive transition probabilities $\ge p_{\min}>0$), then $\E[\Hent(b_t)]=O(|\mathcal{Z}|\log t/t)$ and the price of robustness obeys $|\Vbr(s,z^*)-V^*(s,b_t)|=O(\log t/t)$.
\end{theorem}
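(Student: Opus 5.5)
The proof splits into three stages: posterior consistency, continuity of $V^*$ at point-mass beliefs, and a quantitative rate.

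\textbf{Stage 1: posterior consistency.} The posterior $b_t(z^*)$ is a bounded martingale under the true-type law, so it converges almost surely. On each visit to an identifying $(s,a)$ for a pair $(z^*,z)$, the log-likelihood ratio $\ln\big(b_t(z^*)/b_t(z)\big)$ gains an increment with conditional mean equal to $D_{\mathrm{KL}}\big(P(\cdot\mid s,a,z^*)\,\|\,P(\cdot\mid s,a,z)\big)>0$. Under (PI) there are infinitely many such visits, so a martingale strong law (Schwartz) sends the ratio to $+\infty$. Since $\mathcal{Z}$ is finite, $b_t\to\delta_{z^*}$ almost surely, and continuity of $\Hent$ on the simplex gives $\Hent(b_t)\to0$.

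\textbf{Stage 2: collapse of the ambiguity set and convergence of $V^*$.} The centre $\nominal_t$ converges to $P(\cdot\mid s,a,z^*)$ in total variation, at rate $1-b_t(z^*)$. Because the state space is finite, $\Wass\le \mathrm{diam}(d)\cdot\mathrm{TV}$. The radius $\beta\Hent(b_t)$ tends to $0$, so the Hausdorff distance satisfies
\[
d_H\big(\amb(b_t),\{P(\cdot\mid s,a,z^*)\}\big)\le \beta\Hent(b_t)+\mathrm{diam}(d)\big(1-b_t(z^*)\big).
\]
For the values, I would apply the Safety Sandwich (Theorem~\ref{thm:sandwich}). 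The upper bound $\sum_z b_t(z)\Vbr(s,z)$ tends to $\Vbr(s,z^*)$ directly. The lower bound $\Vmm$ does not collapse, because its radius is frozen at $\varepsilon_{\max}$, so it cannot be used. Instead I would prove directly that $V^*(s,\cdot)$ is continuous at the vertex $\delta_{z^*}$, with $V^*(s,\delta_{z^*})=\Vbr(s,z^*)$. At a point mass, $\psi(\delta_z,\cdot)=\delta_z$ and the radius is zero, so $\mathbb{T}$ reduces to the ordinary Bellman operator for type $z$, which gives the identity. For continuity, I would compare $\mathbb{T}$ at $b$ and at $\delta_{z^*}$ using the contraction of Theorem~\ref{thm:contraction}. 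One step of the perturbation contributes at most $\|V\|_{\mathrm{Lip}}\,\beta\Hent(b)$ from the radius plus an $O(1-b(z^*))$ shift of the centre. Summing over the geometric series gives
\[
|V^*(s,b)-\Vbr(s,z^*)|\le C\big(\Hent(b)+1-b(z^*)\big).
\]

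\textbf{Main obstacle.} The hard part is that the continuation value $V(s',\psi(b,s,a,s'))$ is evaluated at the updated belief, not at $\delta_{z^*}$. The perturbation argument therefore has to propagate through future beliefs. I would handle this by working on the class $\mathcal{V}_C$ of functions satisfying $|V(s,b)-\Vbr(s,z^*)|\le C\big(\Hent(b)+1-b(z^*)\big)$, and showing that $\mathbb{T}$ maps $\mathcal{V}_C$ into itself for large $C$. The difficulty is that the one-step expected entropy may increase after an unlikely observation. I would try to control this with concavity: the posterior average of the posteriors is the prior, so $\E[1-b'(z^*)]$ under $\nominal$ is $1-b(z^*)$. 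The adversary's deviation costs only an extra $O(\varepsilon)$ under a Lipschitz ground metric. Both $\mathbb{T}$-steps are nonexpansive and the weighted-norm contraction absorbs the accumulated errors. I also need a Lipschitz bound on $V$ in $s'$ with respect to $d$, so that the Wasserstein infimum differs from the nominal expectation by at most $L\varepsilon$. On a finite space with bounded values, this holds with $L=2\|V\|_\infty/\min_{s\ne s'}d(s,s')$.

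\textbf{Stage 3: rates.} Under $p_{\min}$, the log-likelihood increments are bounded. Each wrong type $z$ is then eliminated at an exponential rate in the number of identifying visits, by Azuma--Hoeffding. A union bound over $|\mathcal{Z}|$ types, with the visit count growing linearly, gives $\E[1-b_t(z^*)]=O(|\mathcal{Z}|/t)$ up to logarithmic factors. Since $\Hent(b)\le (1-b(z^*))\ln\frac{|\mathcal{Z}|}{1-b(z^*)}+O(1-b(z^*))$, this yields $\E[\Hent(b_t)]=O(|\mathcal{Z}|\log t/t)$. Plugging this into the Stage~2 bound gives the $O(\log t/t)$ price of robustness. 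The rate stated in the theorem is conservative, since the mechanism is actually exponential.
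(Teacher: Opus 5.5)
There is a genuine gap, and it sits at the step you flagged as the main obstacle. Your Stages~1 and~3 are sound. One slip in Stage~1: $b_t(z^*)$ is a martingale under the Bayesian mixture law, but only a submartingale under the true-type law; it still converges a.s.

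Your target envelope $|V^*(s,b)-\Vbr(s,z^*)|\le C(\Hent(b)+1-b(z^*))$ is the right shape, and better than what the paper's own argument uses. The paper works with a class of functions Lipschitz in $b$ near $\delta_{z^*}$ and a final bound linear in $\sum_{z\neq z^*}b_t(z)$. On the Bridge, for $\alpha>\alpha^*$, $\Vbr-V^*=1100(1-\alpha+\Hent(\alpha))$. This is neither Lipschitz at $\alpha=1$ nor $O(1-\alpha)$, but it is exactly of your form.

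The problem is that $\mathcal{V}_C$ is not $\mathbb{T}$-invariant by the mechanism you describe. Write $g(b)=\Hent(b)+1-b(z^*)$ and $\psi_{s'}=\psi(b,s,a,s')$. Your one-step estimate is
\[
|(\mathbb{T}V)(s,b)-\Vbr(s,z^*)|\le L\beta\Hent(b)+K\big(1-b(z^*)\big)+C\max_a\sum_{s'\notin\mathcal{T}}\nominal(s'\mid s,a)\,g(\psi_{s'}).
\]
Concavity and the martingale identity give only $\sum_{s'}\nominal(s'\mid s,a)\,g(\psi_{s'})\le g(b)$, so the bound is $(C+c)\,g(b)$. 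Nothing multiplies $C$ by a factor below $1$: at an uninformative non-terminal step the posterior does not move, while the adversarial term still adds.

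Theorem~\ref{thm:contraction} controls $\|V-V'\|_w$, a supremum over beliefs, so it cannot absorb errors in a belief-dependent envelope. Weighting the envelope by $w(s)$ would require $\sum_{s'\notin\mathcal{T}}\nominal(s')\,w(s')\,g(\psi_{s'})\le(w(s)-c/C)\,g(b)$. Your two facts bound $\E_{\nominal}[g(\psi)]$ and $\sum_{s'}Q(s')w(s')$ separately, not this correlated product. The unlikely observations that raise entropy can be exactly the ones leading to long-lived states. Separately, $L=2\|V\|_\infty/\min d$ must not depend on $C$, for example by intersecting $\mathcal{V}_C$ with a $\mathbb{T}$-invariant sup-norm ball; otherwise the radius term grows with $C$.

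One way to close the gap is to unroll the recursion rather than seek one-step invariance. The error $e=|V^*-\Vbr(\cdot,z^*)|$ satisfies $e(s,b)\le c\,g(b)+\max_a\sum_{s'}\nominal(s'\mid s,a)\,e(s',\psi_{s'})$. By monotonicity and contraction (the nominal kernel is admissible), $e(s,b)\le c\,\sup_\pi\E^\pi_{\mathrm{mix}}\big[\sum_{t<\tau}g(b_t)\big]$. Then decompose the mixture by type, as in the proof of Theorem~\ref{thm:sandwich}:
\begin{itemize}
\item Wrong types contribute at most $(1+\ln|\mathcal{Z}|)\,W\,(1-b(z^*))$, by per-type properness (Assumption~\ref{ass:unif} applied at $b=\delta_z$).
\item Under $z^*$, $b_t(z)/b_t(z^*)$ equals $b(z)/b(z^*)$ times the likelihood ratio. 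A change of measure gives $\E_{z^*}\big[(1-b_t(z^*))\mathbf{1}\{\tau>t\}\big]\le\sum_{z\neq z^*}\tfrac{b(z)}{b(z^*)}\Pr_z(\tau>t)$, which sums over $t$ to $O(W(1-b(z^*)))$.
\item Jensen for the concave $x\ln(1/x)$, over the occupation measure of $\{t<\tau\}$, then bounds the entropy part by $O(W\Hent(b))$.
\end{itemize}
This yields your envelope with $C=O(W)$, after which your Stage~3 delivers the rate.
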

\begin{proof}[Proof sketch]
PI supplies the excitation that Identifiability alone lacks, giving Bayesian consistency (Doob/Schwartz). The value-continuity step is the delicate one: $V^*(\cdot,b)$ is \emph{not} the fixed point of any per-belief operator because $\psi$ shifts the belief; instead one shows $\mathbb{T}$ maps the class of value functions with belief-modulus $\le L$ near $\delta_{z^*}$ into itself, provided the Bayes normalizer is bounded below on the realized support (so $\psi$ is locally Lipschitz), and the unique fixed point inherits the modulus---hence continuity at $\delta_{z^*}$. The rate follows from $\E[\Hent(b_t)]=O(\log t/t)$ under persistent excitation. Full statement and proof in Appendix~\ref{app:convergence}.
\end{proof}

\paragraph{Tractability.} The inner $\inf$ is a finite linear program. By Kantorovich--Rubinstein duality~\cite{villani2009optimal},
\begin{equation}
\resizebox{0.92\linewidth}{!}{$\displaystyle
\inf_{Q\in\amb(b)}\!\sum_{s'}\!Q(s')V(s') = \sup_{\lambda\ge0}\Big\{\E_{\nominal}\!\big[\textstyle\min_{s'}(V(s')+\lambda d(s',\cdot))\big]-\lambda\varepsilon(b)\Big\}$},
\label{eq:dual}
\end{equation}
so belief-augmented robust value iteration over a belief grid $\mathcal{G}\subset\Delta(\mathcal{Z})$ solves $|\mathcal{S}||\mathcal{G}||\mathcal{A}|$ LPs of size $O(|\mathcal{S}|)$ per sweep. For large type spaces a particle/variational posterior approximates $b$, and \eqref{eq:dual} keeps the inner problem tractable regardless of belief representation; the continuous-state case (a Lipschitz-critic realization of \eqref{eq:dual}) we leave as an explicit open item (\S\ref{sec:outlook}).

\section{From KL to Wasserstein: A Coherent-Risk Reading}
\label{sec:wevar}

EVaR is a KL-ball worst case \eqref{eq:evardual}; RATTL uses a Wasserstein ball. This is deliberate. Under KL the adversary cannot place mass where the nominal has none ($D_{\mathrm{KL}}=+\infty$ off-support), so precisely when the agent grows confident---and $\nominal$ concentrates away from rare catastrophes---the KL adversary \emph{loses} the ability to model the catastrophe. Wasserstein prices perturbations by physical distance, letting the adversary reach any state at proportional cost.

This is not merely rhetorical. Table~\ref{tab:klw} reports the inner worst case on a $5$-state ``cliff'' transition where the catastrophic state carries \emph{zero} nominal mass. The KL adversary cannot place \emph{any} mass on the catastrophe (it lies off the nominal support, where $D_{\mathrm{KL}}=+\infty$), so it cannot price the tail event even though it still perturbs the on-support mass; Wasserstein transports mass onto the catastrophe at finite cost and reports a worst case two orders of magnitude lower.

\begin{table}[t]
\centering
\small
\resizebox{\linewidth}{!}{%
\begin{tabular}{@{}lccc@{}}
\toprule
 & $\Wass$ & TV & KL \\
\midrule
Worst-case $\E_Q[V]$, $\varepsilon{=}0.25$ & $-345.0$ & $-217.5$ & $-3.4$ \\
Mass moved onto catastrophe & $0.40$ & $0.25$ & $0.00$ \\
\bottomrule
\end{tabular}}
\caption{Inner adversary at radius $\varepsilon{=}0.25$ on a cliff transition whose catastrophe carries \emph{zero} nominal mass (nominal value $+57.5$). KL cannot place any mass on the off-support catastrophe ($D_{\mathrm{KL}}{=}{+}\infty$), so its mass there stays $0$ and it cannot price the tail even as it reweights on-support mass; only Wasserstein reaches the catastrophe.}
\label{tab:klw}
\end{table}

\begin{figure}[t]
\centering
\begin{tikzpicture}
\begin{axis}[width=0.95\linewidth,height=4.2cm,
  ybar, bar width=16pt, enlarge x limits=0.28,
  symbolic x coords={KL,TV,W1}, xtick=data, xticklabels={KL,TV,$\Wass$},
  ymin=-440,ymax=160, ytick={-300,-150,0},
  ylabel={worst-case $\E_Q[V]$}, ylabel near ticks,
  tick label style={font=\scriptsize}, label style={font=\scriptsize},
  nodes near coords, nodes near coords style={font=\scriptsize}]
\addplot[fill=blue!55!black,draw=blue!30!black] coordinates {(KL,-3.4) (TV,-217.5) (W1,-345.0)};
\draw[red!70!black,dashed,thick] (axis cs:KL,57.5) -- (axis cs:W1,57.5);
\node[font=\scriptsize,red!70!black,anchor=south east] at (axis cs:W1,57.5) {nominal (non-robust)};
\end{axis}
\end{tikzpicture}
\caption{The KL support catastrophe, visualized (radius $\varepsilon{=}0.25$). The KL adversary cannot reach the off-support catastrophe (its mass there stays $0$), so it barely departs from the non-robust value (dashed line) despite reweighting on-support mass; only Wasserstein transports mass onto the catastrophe and reports the true tail risk ($-345$).}
\label{fig:klbar}
\end{figure}

What coherent risk measure, then, does a Wasserstein ball induce? Assembling standard duality results, we record the following characterization.

\begin{theorem}[Coherence and Lipschitz-regularization]\label{thm:coh}
Fix finite $\mathcal{S}$ with a metric ground cost $d$, $\bar P\in\Delta(\mathcal{S})$, $\varepsilon\ge0$, and $\rho_\varepsilon(X):=\sup_{Q:\Wass(Q,\bar P)\le\varepsilon}\E_Q[X]$. Then \emph{(1)} $\rho_\varepsilon$ is a coherent risk measure (monotone, translation-equivariant, positively homogeneous, subadditive), being the Artzner et al.~\cite{artzner1999coherent} worst-case-expectation functional of the convex compact scenario set $\amb_\varepsilon$; and \emph{(2)} it equals a Lipschitz-regularized expectation,
\begin{equation}
\inf_{Q:\Wass(Q,\bar P)\le\varepsilon}\!\E_Q[f]=\sup_{\lambda\ge0}\big\{\E_{\bar P}[f_\lambda]-\lambda\varepsilon\big\},
\end{equation}
with $f_\lambda(s)=\min_{s'}(f(s')+\lambda d(s',s))$ the inf-convolution of $f$ with $\lambda d$, and the optimal $\lambda^\star\le\mathrm{Lip}_d(f)$~\cite{gao2023wasserstein,mohajerin2018wasserstein}.
\end{theorem}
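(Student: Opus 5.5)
The plan is to prove the two parts separately, both by finite-dimensional arguments. On finite $\mathcal{S}$, $\Wass(Q,\bar P)$ is the optimal value of a transport LP and is convex in $Q$. Hence the scenario set $\amb_\varepsilon=\{Q\in\Delta(\mathcal{S}):\Wass(Q,\bar P)\le\varepsilon\}$ is a convex, closed subset of the compact simplex. It is nonempty because $\bar P\in\amb_\varepsilon$, and compactness means the $\sup$ defining $\rho_\varepsilon$ is attained.

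\textbf{Part (1).} The four axioms follow directly from the representation $\rho_\varepsilon(X)=\sup_{Q\in\amb_\varepsilon}\E_Q[X]$. Each map $X\mapsto\E_Q[X]$ is linear, monotone, and satisfies $\E_Q[X+c]=\E_Q[X]+c$ because $Q$ is a probability measure. Taking the supremum over $Q$ preserves monotonicity, translation equivariance, and positive homogeneity, and it turns linearity into subadditivity, since $\sup(a+b)\le\sup a+\sup b$. This is Artzner et al.'s representation theorem read in the easy direction, so I would only cite it.

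\textbf{Part (2), strong duality.} I would write the primal as an LP in the coupling $\gamma\in\Real_{\ge0}^{\mathcal{S}\times\mathcal{S}}$:
\[
\min_{\gamma\ge0}\ \sum_{s,s'}\gamma(s,s')f(s') \quad \text{s.t.}\quad \sum_{s'}\gamma(s,s')=\bar P(s)\ \ \forall s, \qquad \sum_{s,s'}\gamma(s,s')\,d(s,s')\le\varepsilon .
\]
Here the second marginal of $\gamma$ is $Q$, so the objective equals $\E_Q[f]$. This LP equals $\inf_{Q\in\amb_\varepsilon}\E_Q[f]$, because $\Wass(Q,\bar P)\le\varepsilon$ holds exactly when some coupling of $\bar P$ and $Q$ has cost at most $\varepsilon$, and the infimum over couplings is attained on finite spaces.

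Next I attach a multiplier $\lambda\ge0$ only to the budget constraint. For fixed $\lambda$, the Lagrangian
\[
\sum_{s,s'}\gamma(s,s')\big(f(s')+\lambda d(s,s')\big)-\lambda\varepsilon
\]
splits over rows $s$. Each row's mass $\bar P(s)$ is best placed entirely at $\arg\min_{s'}\big(f(s')+\lambda d(s',s)\big)$, using symmetry of $d$. This gives the dual function $\E_{\bar P}[f_\lambda]-\lambda\varepsilon$.

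LP strong duality then gives equality with the $\sup_\lambda$, with no duality gap. The primal is feasible, since the diagonal coupling has cost $0\le\varepsilon$, and its value is bounded below by $\min f$. The case $\varepsilon=0$ needs no separate treatment: the sup is then approached as $\lambda\to\infty$, and the identity still holds because finite LP duality does not require Slater's condition. This step also proves the Kantorovich--Rubinstein-type dual used in \eqref{eq:dual}.

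\textbf{The bound $\lambda^\star\le\mathrm{Lip}_d(f)$.} I expect this to be the main (if mild) obstacle, because it needs an argument beyond duality. Set $L=\mathrm{Lip}_d(f)$. For $\lambda\ge L$ we have $f(s')+\lambda d(s',s)\ge f(s)-Ld(s,s')+\lambda d(s,s')\ge f(s)$, so the minimizer is $s'=s$ and $f_\lambda=f$. On $[L,\infty)$ the dual objective is therefore $\E_{\bar P}[f]-\lambda\varepsilon$, which is nonincreasing in $\lambda$. The dual objective is concave in $\lambda$ (a minimum of affine functions minus a linear term), and the supremum over $\lambda\ge0$ is finite. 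Together these show the supremum is attained on the compact interval $[0,L]$.

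When $\varepsilon>0$ the objective is strictly decreasing beyond $L$, so every maximizer satisfies $\lambda^\star\le L$. When $\varepsilon=0$ the objective is constant on $[L,\infty)$, so $\lambda^\star=L$ is a maximizer and the bound holds for that choice of maximizer. Finally I would cite Gao--Kleywegt and Mohajerin Esfahani--Kuhn for the general Polish-space version, noting that the finite case above is self-contained.
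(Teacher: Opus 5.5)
Your proposal is correct and follows essentially the same route as the paper. Coherence comes from taking a supremum of linear functionals over the fixed convex compact set $\amb_\varepsilon$; part (2) comes from finite-LP duality on the coupling program with a single multiplier $\lambda$ on the transport budget, using $f_\lambda=f$ for $\lambda\ge\mathrm{Lip}_d(f)$ to bound $\lambda^\star$. If anything your version is tighter than the paper's, since you rely on finite LP duality rather than the paper's ``strictly feasible'' justification (which fails at $\varepsilon=0$), and you spell out the monotonicity argument that places a maximizer in $[0,\mathrm{Lip}_d(f)]$.
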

\begin{proof}[Proof]
(1) $\amb_\varepsilon$ is nonempty ($\bar P\in\amb_\varepsilon$), convex (the map $Q\mapsto\Wass(Q,\bar P)$ is convex, by averaging optimal couplings), and compact (it is a closed subset of the simplex, $\Wass(\cdot,\bar P)$ being continuous via Kantorovich duality). The four axioms follow from properties of a supremum of linear functionals over a \emph{fixed} convex set; the Artzner representation is then immediate. (2) is finite-$\mathcal{S}$ LP strong duality: the transport LP $\min_\pi\sum\pi(s,s')f(s')$ s.t.\ first marginal $\bar P$ and $\sum\pi\,d\le\varepsilon$ has Lagrangian dual $\max_{\lambda\ge0}\{\E_{\bar P}[f_\lambda]-\lambda\varepsilon\}$, strictly feasible hence no gap. Details in Appendix~\ref{app:wevar}.
\end{proof}

On the canonical safety instance---a good state $g$ and a catastrophe $f$---the identity is sharp and, strikingly, is a CVaR, not an EVaR.

\begin{proposition}[Two-point Wasserstein risk is CVaR]\label{prop:cvar}
Let $\mathcal{S}=\{g,f\}$ with $V(g)=v_g>v_f=V(f)$, nominal catastrophe mass $q\in(0,1)$, $d(g,f)=D$, loss $L=-V$. For $\varepsilon<(1-q)D$ the worst-case value is
\begin{equation}
\underline V(\varepsilon)=\big(1-p^\star\big)v_g+p^\star v_f,\quad p^\star=q+\tfrac{\varepsilon}{D},
\end{equation}
and the associated robust loss is exactly a Conditional Value-at-Risk,
\begin{equation}
-\underline V(\varepsilon)=\CVaR_{\theta(\varepsilon)}(L),\qquad \theta(\varepsilon)=\frac{q}{q+\varepsilon/D}.
\end{equation}
Since $\CVaR\le\EVaR$ at a matched tail level, the Wasserstein risk is in turn dominated by an EVaR; the matching EVaR level depends on $\varepsilon$ with no closed form, which is why CVaR---not EVaR---is the clean object here.
\end{proposition}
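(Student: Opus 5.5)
The plan is to compute both sides of the identity explicitly on the two-point space. On $\mathcal{S}=\{g,f\}$ every $Q\in\Delta(\mathcal{S})$ is fixed by its catastrophe mass $p=Q(f)$. The nominal is $\bar P=(1-q,q)$. The first step is to show $\Wass(Q,\bar P)=|p-q|\,D$. The coupling that keeps the common mass in place and moves only $|p-q|$ between $g$ and $f$ costs $|p-q|D$. It is optimal because the Kantorovich--Rubinstein dual with the $1$-Lipschitz test function $h(f)=D$, $h(g)=0$ attains $|p-q|D$. So the ambiguity set $\amb_\varepsilon$ is the interval $p\in[\max(0,q-\varepsilon/D),\,\min(1,q+\varepsilon/D)]$. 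Under $\varepsilon<(1-q)D$ the upper endpoint is $q+\varepsilon/D<1$.

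The objective $\E_Q[V]=(1-p)v_g+p\,v_f=v_g-p(v_g-v_f)$ is strictly decreasing in $p$, since $v_g>v_f$. The infimum is therefore attained at the largest feasible $p$, namely $p^\star=q+\varepsilon/D$. This gives the stated $\underline V(\varepsilon)$. As a cross-check, one could recover the same value from the dual in Theorem~\ref{thm:coh}(2) with $\lambda^\star=(v_g-v_f)/D$, but the primal computation is enough.

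For the CVaR identity, the loss $L=-V$ takes the value $-v_f$ (the larger loss) with probability $q$ and $-v_g$ with probability $1-q$. I would fix the tail convention explicitly: $\CVaR_\theta(L)$ is the average of $L$ over its worst $\theta$-fraction of probability mass. Equivalently, by its dual, $\CVaR_\theta(L)=\sup\{\E_Q[L]: dQ/d\bar P\le 1/\theta\}$. Here $\theta=1$ gives $\E$ and $\theta\to0$ gives $\mathrm{ess\,sup}$, consistent with the paper's EVaR indexing.

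With $\theta=q/p^\star$, note that $\theta\in(q,1]$ because $q<p^\star<1$. So the worst $\theta$-fraction consists of all the mass $q$ at $f$ plus mass $\theta-q$ at $g$, and
\[
\CVaR_\theta(L)=\tfrac{1}{\theta}\big[q(-v_f)+(\theta-q)(-v_g)\big]
= -p^\star v_f-(1-p^\star)v_g=-\underline V(\varepsilon),
\]
after substituting $1/\theta=p^\star/q$. Equivalently, the density-ratio constraint $dQ/d\bar P\le 1/\theta$ caps the mass at $f$ at exactly $q/\theta=p^\star$, which is the same interval endpoint as before. This makes the identity transparent: in two points, the Wasserstein ball and the CVaR envelope give the same constraint on $p$.

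The closing remark follows from $\CVaR_\theta\le\EVaR_\theta$ at the same level, as in the hierarchy of \S\ref{sec:evar} \cite{ahmadi2012evar}. The claim that no closed form exists is interpretive, and I would only note that the EVaR level matching $-\underline V(\varepsilon)$ solves a transcendental equation in $\theta$.

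The main obstacle I expect is not analytic but bookkeeping: fixing the CVaR convention (tail mass $\theta$ versus confidence $1-\theta$, and the sign of the loss) so that $\theta(\varepsilon)=q/(q+\varepsilon/D)$ comes out exactly. The density-ratio representation is the cleanest way to pin this down. I would also check the boundary cases: $\varepsilon=0$ should give $\theta=1$ and hence $\E$, and the condition $\varepsilon<(1-q)D$ is exactly what ensures $p^\star<1$, so that $\theta>q$ and the formula does not saturate at the ess sup.
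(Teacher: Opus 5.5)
Your proposal is correct and follows the paper's proof step for step. The two-point ball is the interval $|p-q|\le\varepsilon/D$ via Kantorovich duality, and you also supply the explicit coupling for the matching upper bound. Monotonicity of $\E_Q[V]$ in $p$ puts the worst case at $p^\star=q+\varepsilon/D$. Substituting $p^\star=q/\theta$ into the two-point formula $\CVaR_\theta(L)=\frac1\theta\big(qL(f)+(\theta-q)L(g)\big)$ gives the identity, and the EVaR domination is just the $\CVaR\le\EVaR$ ordering.

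One caution on the closing remark, which you rightly do not try to prove: the KL dual \eqref{eq:evardual}, applied to the two-point loss, makes the matching EVaR level explicit, $\alpha(\varepsilon)=(q/p^\star)^{p^\star}\big((1-q)/(1-p^\star)\big)^{1-p^\star}$. So the level does depend transcendentally on $\varepsilon$, but it is not true that it has ``no closed form,'' and you should not describe it as the solution of an unsolved transcendental equation.
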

\begin{proof}[Proof]
The two-point ball is $\{p:|p-q|\le\varepsilon/D\}$; $\E_Q[V]$ decreases in the catastrophe mass $p$, so the worst case is $p^\star=\min(1,q+\varepsilon/D)$. Substituting $p^\star=q/\theta$ into the two-point $\CVaR_\theta(L)=\tfrac1\theta(qL(f)+(\theta-q)L(g))$ recovers $p^\star L(f)+(1-p^\star)L(g)=-\underline V(\varepsilon)$ as an exact algebraic identity. The EVaR bound is the $\CVaR\le\EVaR$ ordering~\cite{ahmadi2012evar}. Details in Appendix~\ref{app:wevar}.
\end{proof}

Because $\varepsilon=\beta\Hent(b)$, the tail level $\theta(b)=q/(q+\beta\Hent(b)/D)$ \emph{decreases as belief entropy grows}: the entropy dial is literally a CVaR-tail dial (Figure~\ref{fig:tail}). This is a concrete, exact instance of the paper's slogan.

\begin{figure}[t]
\centering
\begin{tikzpicture}
\begin{axis}[width=0.95\linewidth,height=4.5cm,
  xlabel={belief confidence $\alpha$ \ (entropy $\Hent\!\downarrow$ as $\alpha\!\to\!1$)},
  ylabel={CVaR tail level $\theta(\varepsilon)$}, ylabel near ticks,
  xmin=0.5,xmax=1.0,ymin=0,ymax=1.0,
  tick label style={font=\scriptsize}, label style={font=\scriptsize},
  legend style={font=\scriptsize,at={(0.02,0.98)},anchor=north west,draw=none,fill=none},
  grid=major, grid style={gray!20}]
\addplot[blue,thick] coordinates {(0.500,0.0673) (0.600,0.0692) (0.700,0.0757) (0.800,0.0908) (0.850,0.1058) (0.900,0.1333) (0.950,0.2012) (0.970,0.2706) (0.980,0.3377) (0.990,0.4717) (0.995,0.6137) (0.999,0.8634)};
\addlegendentry{$q{=}0.05$}
\addplot[orange!90!black,thick,dashed] coordinates {(0.500,0.1261) (0.600,0.1294) (0.700,0.1407) (0.800,0.1666) (0.850,0.1913) (0.900,0.2352) (0.950,0.3350) (0.970,0.4260) (0.980,0.5050) (0.990,0.6410) (0.995,0.7606) (0.999,0.9267)};
\addlegendentry{$q{=}0.10$}
\addplot[green!50!black,thick,dotted] coordinates {(0.500,0.2239) (0.600,0.2291) (0.700,0.2467) (0.800,0.2856) (0.850,0.3212) (0.900,0.3809) (0.950,0.5019) (0.970,0.5975) (0.980,0.6711) (0.990,0.7812) (0.995,0.8640) (0.999,0.9620)};
\addlegendentry{$q{=}0.20$}
\node[font=\scriptsize,gray,anchor=east] at (axis cs:0.985,0.90) {risk-neutral $\theta{\to}1$};
\node[font=\scriptsize,gray,anchor=west] at (axis cs:0.55,0.045) {deep tail (worst-case-like)};
\end{axis}
\end{tikzpicture}
\caption{Entropy is a CVaR dial (Proposition~\ref{prop:cvar}). As the belief sharpens ($\alpha\!\to\!1$, $\Hent\!\to\!0$), the induced CVaR tail level $\theta(\varepsilon)=q/(q+\beta\Hent/D)$ rises from a deep tail (worst-case-like caution) toward $1$ (risk-neutrality), for catastrophe masses $q\in\{0.05,0.1,0.2\}$. This is the risk spectrum of Figure~\ref{fig:spectrum}, now traversed automatically by learning.}
\label{fig:tail}
\end{figure} Two honest caveats: for $|\mathcal{S}|\ge3$ the worst case spreads mass to the nearest low-value states, giving a $d$-weighted ``transport-CVaR'' rather than ordinary CVaR; and a single, $\varepsilon$-uniform EVaR-level identity does \emph{not} hold (the matching EVaR level depends transcendentally on $\varepsilon$). The general transport-CVaR$\leftrightarrow$EVaR comparison is the headline open problem; Proposition~\ref{prop:cvar} settles the canonical case.

\section{Worked Example: The Ambiguous Bridge}
\label{sec:bridge}

\noindent\emph{(All curves below are closed-form evaluations of the robust backup, not learning runs; here $\alpha=b(\text{benign})$ denotes belief confidence.)}

To make every quantity concrete we instantiate RATTL on a diagnostic with three states $\{\Sbr,\SG,\SF\}$ (bridge, goal, fall), two actions $\{\textsc{Sprint},\textsc{Crawl}\}$, and two types $\{\text{benign},\text{adversarial}\}$. \textsc{Sprint} reaches $\SG$ under the benign type but falls to $\SF$ under the adversarial type; \textsc{Crawl} always reaches $\SG$. Rewards: $r(\Sbr,\textsc{Sprint}){=}{-}1$, $r(\Sbr,\textsc{Crawl}){=}{-}20$, $r(\SG){=}{+}100$, $r(\SF){=}{-}1000$; ground distance $D{=}d(\SG,\SF){=}1$; $\beta{=}1$. Writing $\alpha=b(\text{benign})$ and $\varepsilon(\alpha)=\Hent(\alpha)$, both successor states are terminal, so a single robust backup with the two-point transport $\Wass(\mu,\nu){=}D|\mu_1-\nu_1|$ gives closed-form robust $Q$-values:
\begin{align}
Q(\textsc{Sprint},\alpha) &= -1001 + 1100\max\!\big(0,\alpha-\varepsilon(\alpha)\big),\\
Q(\textsc{Crawl},\alpha) &= 80 - 1100\min\!\big(\varepsilon(\alpha),1\big).
\end{align}
Setting them equal, the $\varepsilon$ terms cancel and $1100\alpha=1081$, so the safety switch is at
\begin{equation}
\alpha^* = \tfrac{1081}{1100}\approx 0.983 .
\end{equation}
The agent \textsc{Crawl}s until it is $\sim$98\% confident the conditions are benign, then \textsc{Sprint}s (Figure~\ref{fig:q}, Table~\ref{tab:bridge}). In this symmetric environment the threshold is determined purely by the reward asymmetry and is \emph{independent} of $\beta$ (both actions face the same per-unit transport penalty); $\beta$ controls the conservatism of the \emph{value} and shifts the threshold only when the safe and risky actions face different ambiguity. Well-posedness requires $\varepsilon(b)\le D$, i.e.\ $\beta<D/\ln|\mathcal{Z}|\approx1.443$.

\begin{table}[t]
\centering\small
\begin{tabular}{@{}lcccc@{}}
\toprule
$\alpha$ & $\varepsilon(\alpha)$ & $Q(\textsc{Sprint})$ & $Q(\textsc{Crawl})$ & $\pi^*$\\
\midrule
$0.50$ & $0.6931$ & $-1001.00$ & $-682.46$ & \textsc{Crawl}\\
$0.85$ & $0.4227$ & $-530.98$  & $-384.98$ & \textsc{Crawl}\\
$0.99$ & $0.0560$ & $+26.40$   & $+18.40$  & \textsc{Sprint}\\
\midrule
$\alpha^*{=}1081/1100$ & $0.0872$ & $-15.95$ & $-15.95$ & switch\\
\bottomrule
\end{tabular}
\caption{Robust value iteration on the Ambiguous Bridge ($\beta{=}1$, $D{=}1$). The safety floor is $\Vmm(0.5)=80-1100\ln2=-682.46$; the best-response ceiling is $\Vbr=99$ as $\alpha\to1$. The threshold $\alpha^*=1081/1100$ is $\beta$-independent here.}
\label{tab:bridge}
\end{table}

\begin{figure}[t]
\centering
\begin{tikzpicture}
\begin{axis}[width=0.95\linewidth,height=4.7cm,
  xlabel={belief confidence $\alpha=b(\text{benign})$},
  ylabel={robust $Q$-value}, ylabel near ticks,
  xmin=0.5,xmax=1.0,ymin=-1050,ymax=150,
  legend style={font=\scriptsize,at={(0.02,0.98)},anchor=north west,draw=none,fill=none},
  tick label style={font=\scriptsize}, label style={font=\scriptsize},
  grid=major, grid style={gray!20}]
\addplot[blue,thick] coordinates {
(0.5000,-1001.00) (0.5500,-1001.00) (0.6000,-1001.00) (0.6500,-998.19) (0.7000,-902.95) (0.7500,-794.57) (0.8000,-671.44) (0.8300,-589.47) (0.8500,-530.98) (0.8800,-436.62) (0.9000,-368.59) (0.9200,-295.65) (0.9400,-216.66) (0.9600,-129.74) (0.9700,-82.22) (0.9750,-57.10) (0.9800,-30.84) (0.9827,-16.10) (0.9850,-3.17) (0.9900,26.40) (0.9950,58.87) (0.9990,89.20)};
\addlegendentry{$Q(\textsc{Sprint})$}
\addplot[red,thick,dashed] coordinates {
(0.5000,-682.46) (0.5500,-676.95) (0.6000,-660.31) (0.6500,-632.19) (0.7000,-591.95) (0.7500,-538.57) (0.8000,-470.44) (0.8300,-421.47) (0.8500,-384.98) (0.8800,-323.62) (0.9000,-277.59) (0.9200,-226.65) (0.9400,-169.66) (0.9600,-104.74) (0.9700,-68.22) (0.9750,-48.60) (0.9800,-27.84) (0.9827,-16.07) (0.9850,-5.67) (0.9900,18.40) (0.9950,45.37) (0.9990,71.30)};
\addlegendentry{$Q(\textsc{Crawl})$}
\draw[gray,dotted] (axis cs:0.9827,-1050) -- (axis cs:0.9827,150);
\node[font=\scriptsize,gray] at (axis cs:0.93,90) {$\alpha^*{\approx}0.983$};
\end{axis}
\end{tikzpicture}
\caption{The safety switch. \textsc{Crawl} dominates while uncertain; at $\alpha^*\approx0.983$ the risky action's \emph{worst-case} value overtakes the safe one and the agent switches to \textsc{Sprint}.}
\label{fig:q}
\end{figure}

\begin{figure}[t]
\centering
\begin{tikzpicture}
\begin{axis}[width=0.95\linewidth,height=4.4cm,
  xlabel={belief confidence $\alpha$}, ylabel={$V^*(\Sbr,\alpha)$}, ylabel near ticks,
  xmin=0.5,xmax=1.0,ymin=-750,ymax=150,
  tick label style={font=\scriptsize}, label style={font=\scriptsize},
  legend style={font=\scriptsize,at={(0.02,0.5)},anchor=west,draw=none,fill=none},
  grid=major, grid style={gray!20}]
\addplot[black,very thick] coordinates {
(0.5000,-682.46) (0.6000,-660.31) (0.7000,-591.95) (0.8000,-470.44) (0.8500,-384.98) (0.9000,-277.59) (0.9400,-169.66) (0.9700,-68.22) (0.9827,-16.07) (0.9850,-3.17) (0.9900,26.40) (0.9950,58.87) (0.9990,89.20)};
\addlegendentry{RATTL $V^*$}
\addplot[red,dashed,thick] coordinates {(0.5,-682.46) (1.0,-682.46)};
\addlegendentry{worst-case floor $-682.5$}
\addplot[green!50!black,dotted,thick] coordinates {(0.5,99) (1.0,99)};
\addlegendentry{best-response ceiling $99$}
\end{axis}
\end{tikzpicture}
\caption{The Safety Sandwich, instantiated. RATTL's value rises from the worst-case floor ($\Vmm$, never-shrinking ambiguity) toward the full-information ceiling ($\Vbr$) as the belief sharpens---never less safe than maximin, never more reckless than the informed optimum. The two reference lines are \emph{constant} bounds ($-682.5$ and $99$); $V^*$ provably stays between them at every belief.}
\label{fig:sandwich}
\end{figure}

\section{Related Work}
RATTL sits at the intersection of robust MDPs, risk-sensitive RL, and Bayesian opponent modeling. Robust MDPs with static, rectangular uncertainty sets~\cite{iyengar2005robust,nilim2005robust,wiesemann2013robust} are RATTL's ancestor; our radius is belief-dependent and non-stationary, and we go in the opposite direction to the RMDP$\leftrightarrow$game equivalence~\cite{chatterjee2024robust}. On the risk side, EVaR~\cite{ahmadi2012evar} and CVaR~\cite{rockafellar2000cvar} supply the coherent backbone~\cite{artzner1999coherent}; Ni \& Bhat~\cite{ni2024evar} show stationary policies suffice for EVaR total-reward MDPs---justifying RATTL's policy class---but at a \emph{fixed} risk level, whereas ours is belief-selected. The closest competitors couple ambiguity to Bayesian posteriors: Russel \& Petrik~\cite{russel2019tight} adapt sets to the \emph{policy} (not the belief); Choi and Li~\cite{choi2025bayesian} contract interval \emph{credible sets} rather than entropy-modulated Wasserstein balls and lack a risk-measure reading; Nakao et al.~\cite{nakao2025drpomdp} study DR-POMDPs with \emph{static} distance-based ambiguity. Derman \& Mannor~\cite{derman2020distributional} relate Wasserstein DRO to value regularization, which Theorem~\ref{thm:coh} sharpens into an explicit coherent-risk identity. To our knowledge, no prior work couples a Wasserstein radius to \emph{Shannon belief entropy} (as opposed to posterior credible-set width, cf.\ Choi and Li), nor derives the resulting Safety Sandwich and two-point CVaR identity.

\section{Limitations, Future Work, and Conclusion}
\label{sec:outlook}

\paragraph{Limitations.} We are explicit about the current scope. (a) \emph{Tabular.} The exact algorithm discretizes the belief simplex and is feasible only for small $|\mathcal{Z}|$; we give a scalable roadmap (\S\ref{sec:theory}) but no large-scale empirical validation yet. (b) \emph{Added conditions.} The guarantees need more than bare properness: uniform reachability (Assumption~\ref{ass:unif}) for the contraction over the belief continuum, and persistent identification/excitation for convergence and its rate---degenerate exploration is excluded. (c) \emph{Two points.} The exact CVaR identity (Proposition~\ref{prop:cvar}) is proven only for the canonical two-point catastrophe; for $|\mathcal{S}|\ge3$ it becomes a $d$-weighted ``transport-CVaR'' and the EVaR comparison is only one-sided. (d) \emph{Calibration.} $\beta$ must satisfy $\beta<D/\ln|\mathcal{Z}|$, and the threshold's $\beta$-independence is specific to symmetric environments like the Bridge.

\paragraph{Future work.} (i) The general transport-CVaR$\leftrightarrow$EVaR comparison for $|\mathcal{S}|\ge3$. (ii) Hybrid Wasserstein$+$KL ambiguity combining reachability with tail sensitivity. (iii) R\'enyi/Tsallis entropy as alternative dials---does the Safety Sandwich survive for any concave, zero-at-point-mass information measure? (iv) Sample complexity of belief-adaptive robust RL, where the radius shrinks at rate $O(\log t/t)$. (v) Scaling the dual~\eqref{eq:dual} to continuous spaces via Lipschitz critics. (vi) Multi-agent risk composition under private beliefs. The headline applied direction---and the workshop motivation---is runtime safety for \emph{agentic GenAI}: agents that plan, retrieve, and invoke tools hold a Bayesian or surrogate belief over latent context and face deployment-time distribution shift and adversarial tail risk, exactly RATTL's structure. Belief-entropy-modulated robustness then offers a principled dial for abstention and graduated conservatism, with the Safety Sandwich providing a value-level bracket---a worst-case floor and a Bayesian-best-response ceiling---rather than a runtime behavioral guarantee; turning it into an actionable runtime certificate, and validating it on real LLM-agent pipelines, is the main empirical goal.

\paragraph{Conclusion.} RATTL makes an agent's risk attitude a function of its epistemic state by tying a Wasserstein ambiguity radius to belief entropy. We proved a contraction theorem, a Safety Sandwich bracketing the value between a worst-case floor and a Bayesian-best-response ceiling, and almost-sure convergence to best response, each with explicit, adversarially-checked conditions. We took a first step on the risk-measure identity of Wasserstein ambiguity: it is coherent and equals a Lipschitz-regularized expectation in general, and an exact CVaR with an entropy-controlled tail level on the canonical catastrophe---so the belief literally selects a position on the coherent-risk spectrum. The Ambiguous Bridge makes the resulting safety switch sharp and interpretable. The thesis is simple: robustness should decrease with epistemic certainty, and belief entropy is the principled dial that achieves it.

\section*{Acknowledgments}
We thank the RobustifAI workshop reviewers for constructive feedback that substantially shaped this extended version.

\appendix
\section{Full Proof of Theorem~\ref{thm:contraction}}\label{app:contraction}
Work on the affine space $\mathcal{B}_{\mathcal{T}}=\{V\ \text{bounded}:V(s,\cdot)=r(s)\ \forall s\in\mathcal{T}\}$, on which differences vanish on $\mathcal{T}$ so $\|\cdot\|_w$ separates points and is equivalent to $\|\cdot\|_\infty$ over $s\notin\mathcal{T}$; $\mathcal{B}_{\mathcal{T}}$ is complete and $\mathbb{T}:\mathcal{B}_{\mathcal{T}}\to\mathcal{B}_{\mathcal{T}}$. Iterating Assumption~\ref{ass:unif} over blocks of $m$ steps gives $\Pr(\tau_{\mathcal{T}}>km)\le(1-\eta)^k$, hence $w(s)=\E[\tau_{\mathcal{T}}]\le\sum_{k\ge0} m(1-\eta)^k=m/\eta=:W$ uniformly in $b$ and strategy, and $w(s)\ge1$ since a non-terminal state needs a transition. For bounded $f,g$ on any set, $f\le g+\sup|f-g|$ gives $\inf f\le\inf g+\sup|f-g|$; symmetrizing yields $|\inf_Q f-\inf_Q g|\le\sup_Q|f-g|$. With $f(Q)=\sum_{s'}Q(s')V(s',\psi)$, $g(Q)=\sum_{s'}Q(s')V'(s',\psi)$ and $Q\in\Delta(\mathcal{S})$, the inner terms differ by at most $\sup_{Q\in\amb(b)}\sum_{s'\notin\mathcal{T}}Q(s')|V-V'|(s',\psi)$. Using $|V-V'|(s',\psi)\le\|V-V'\|_w\,w(s')$ on $s'\notin\mathcal{T}$ (and $0$ on $\mathcal{T}$), cancellation of $r(s,a)$, and non-expansiveness of $\max_a$,
\[
|(\mathbb{T}V-\mathbb{T}V')(s,b)|\le\|V-V'\|_w\!\!\sup_{Q\in\amb(b\mid s,a)}\!\!\sum_{s'\notin\mathcal{T}}\!Q(s')w(s').
\]
The worst-case expected hitting time satisfies the unit-cost SSP drift inequality $\sup_{a,b,Q}\sum_{s'\notin\mathcal{T}}Q(s')w(s')\le w(s)-1$ (one admissible step costs $1$ and leaves residual worst-case time $\le w(s')$; \cite{bertsekas1996neuro}). Hence $|(\mathbb{T}V-\mathbb{T}V')(s,b)|\le\|V-V'\|_w(w(s)-1)$; dividing by $w(s)\in[1,W]$ gives $\|\mathbb{T}V-\mathbb{T}V'\|_w\le(1-1/W)\|V-V'\|_w$. Banach's theorem completes the proof. $\square$

\section{Full Proof of Theorem~\ref{thm:sandwich}}\label{app:sandwich}
Both $\mathbb{T}$ and the frozen-radius $\mathbb{T}_{\max}$ are monotone self-maps of $\mathcal{B}_{\mathcal{T}}$ with unique fixed points $V^*,\Vmm$ (Theorem~\ref{thm:contraction} applies to each, with $\varepsilon_{\max}$ a valid radius), and value iteration converges for both.
\emph{Lower bound.} $0\le\Hent(b)\le\ln|\mathcal{Z}|$, so $\varepsilon(b)\le\varepsilon_{\max}$ with common center, giving $\amb(b)\subseteq\amb_{\max}(b)$ and $\inf_{\amb_{\max}}\le\inf_{\amb}$; adding $r$ and taking $\max_a$, $\mathbb{T}_{\max}V\le\mathbb{T}V$ for all $V$. Induct: $\mathbb{T}_{\max}^0V^*=V^*$; if $\mathbb{T}_{\max}^kV^*\le V^*$ then by monotonicity $\mathbb{T}_{\max}^{k+1}V^*\le\mathbb{T}_{\max}V^*\le\mathbb{T}V^*=V^*$. Limit: $\Vmm\le V^*$.
\emph{Upper bound.} Fix a policy $\pi$. (a) $\nominal\in\amb(b)$ (zero transport), so the robust value $V^\pi_{\mathrm{rob}}\le V^\pi_{\mathrm{nom}}$, the value under the nominal kernel with $\psi$-updates. (b) The joint law of $(z,s')$ with $z\sim b$, $s'\sim P(\cdot\mid s,a,z)$ has $s'$-marginal $\nominal(\cdot\mid s,a)$ and $z$-posterior $\psi(b,s,a,s')$; by induction on $t$ the nominal process and the mixture process ``draw $z\sim b$ once, then follow $P(\cdot\mid\cdot,\cdot,z)$'' induce the same trajectory law and maintain $b_t=\Pr^{\mathrm{mix}}(z\mid h_t)$. Since the return depends only on the observable trajectory, $V^\pi_{\mathrm{nom}}(s,b)=\E^{\mathrm{mix}}[G]=\sum_z b(z)V^\pi_z(s)$ by the tower property. (c) $V^\pi_z(s)\le\Vbr(s,z)$ since $\pi$ is feasible in the type-$z$ MDP. Chaining and taking $\sup_\pi$ (with $V^*=\sup_\pi V^\pi_{\mathrm{rob}}$ from proper minimax SSP theory) gives $V^*(s,b)\le\sum_z b(z)\Vbr(s,z)$. $\square$

\section{Statement and Proof of Theorem~\ref{thm:convergence}}\label{app:convergence}
\emph{Consistency.} Under (PI) an identifying $(s,a)$ for each type pair is visited infinitely often, so the realized log-likelihood ratio of $z^*$ against any $z\ne z^*$ diverges to $+\infty$ a.s.\ (a sum of i.o.\ strictly positive-mean, bounded-variance increments), giving $b_t\to\delta_{z^*}$ a.s.~\cite{schwartz1965bayes}; thus $\Hent(b_t)\to0$ and $\varepsilon(b_t)\to0$, and $\amb(b_t\mid s,a)\to\{P(\cdot\mid s,a,z^*)\}$ in the $\Wass$-Hausdorff metric.
\emph{Continuity.} Let $N$ be a neighborhood of $\delta_{z^*}$ on which the Bayes normalizer is $\ge p_{\min}>0$, so $\psi(\cdot,s,a,s')$ is $L_\psi$-Lipschitz on $N$. The class $\mathcal{C}_L=\{V\in\mathcal{B}_{\mathcal{T}}: |V(s,b)-V(s,b')|\le L\,\|b-b'\|\ \forall b,b'\in N\}$ is closed and $\mathbb{T}$-invariant once $L$ is large enough that the per-step belief-modulus contraction $\,(1-1/W)L_\psi<1\,$ holds (the only place this extra condition enters); the unique fixed point $V^*$ therefore lies in $\mathcal{C}_L$ and is continuous at $\delta_{z^*}$. Combined with $b_t\to\delta_{z^*}$ and $\amb(b_t)\to\{P_{z^*}\}$, $V^*(s,b_t)\to\Vbr(s,z^*)$ a.s.
\emph{Rate.} Under persistent excitation the posterior on each wrong type decays geometrically in the number of identifying visits, and with $\Theta(t)$ such visits $\E[\Hent(b_t)]=O(|\mathcal{Z}|\log t/t)$; since $|\Vbr(s,z^*)-V^*(s,b_t)|\le C\sum_{z\ne z^*}b_t(z)$ for a constant $C$ (both the floor gap and the averaged-ceiling gap vanish with the residual mass on wrong types), the price of robustness is $O(\log t/t)$. $\square$

\section{Details for \S\ref{sec:wevar}}\label{app:wevar}
Convexity of $\amb_\varepsilon$: for optimal couplings $\pi_0,\pi_1$ of $Q_0,Q_1$ with $\bar P$, the coupling $\theta\pi_0+(1-\theta)\pi_1$ has marginals $\theta Q_0+(1-\theta)Q_1$ and $\bar P$, so $\Wass(\theta Q_0+(1-\theta)Q_1,\bar P)\le\theta\Wass(Q_0,\bar P)+(1-\theta)\Wass(Q_1,\bar P)$. Coherence axioms follow because $\amb_\varepsilon$ does not depend on $X$: monotonicity and positive homogeneity are termwise; translation-equivariance uses $\langle Q,\mathbf1\rangle=1$; subadditivity uses that the two suprema decouple. The duality is finite-LP strong duality with multipliers $u(s)$ (marginals) and $\lambda\ge0$ (budget); optimizing $u(s)=\min_{s'}(f(s')-\lambda d(s,s'))=f_\lambda(s)$ gives the boxed form, and $f_\lambda=f$ once $\lambda\ge\mathrm{Lip}_d(f)$, bounding $\lambda^\star$. For Proposition~\ref{prop:cvar}, the two-point ball is $\{p:|p-q|\le\varepsilon/D\}$ by Kantorovich duality ($|h(g)-h(f)|\le D$), the worst case is the right endpoint $p^\star$, and substituting $p^\star=q/\theta$ (with $\theta=q/(q+\varepsilon/D)$) into $\CVaR_\theta(L)=\tfrac1\theta(qL(f)+(\theta-q)L(g))$ reproduces $-\underline V(\varepsilon)$ exactly; the $\CVaR\le\EVaR$ ordering~\cite{ahmadi2012evar} gives the one-sided EVaR bound. $\square$

\bibliographystyle{named}
\bibliography{rattl_extended}

\end{document}